\relax
\documentclass[letterpaper]{article}
\usepackage{aaai17}
\usepackage{times}
\usepackage{helvet}
\usepackage{courier}
\usepackage{graphicx}
\usepackage{amsmath}
\usepackage{amssymb}
\usepackage{algorithm}
\usepackage{multirow} 
\usepackage{algorithmic}
\usepackage{booktabs,caption,fixltx2e}
\usepackage[flushleft]{threeparttable}
\usepackage{subfig}
\usepackage{amsthm}
\usepackage{threeparttable}
\usepackage{placeins}
\renewcommand{\vec}[1]{\mathbf{#1}}
\newtheorem{theorem}{Theorem}
\newtheorem{corollary}{Corollary}

\frenchspacing
\setlength{\pdfpagewidth}{8.5in}
\setlength{\pdfpageheight}{11in}
\pdfinfo{
/Title (Clustering with Adaptive Structure Learning: A Kernel Approach)
/Author (Put All Your Authors Here, Separated by Commas)}
\setcounter{secnumdepth}{0}  
 \begin{document}
%
\title{Twin Learning for Similarity and Clustering: A Unified Kernel Approach}
\author{Zhao Kang, Chong Peng, Qiang Cheng\\
Department of Computer Science, Southern Illinois University, Carbondale, IL 62901, USA\\
\{Zhao.Kang, pchong, qcheng\}@siu.edu\\
}
\maketitle
\begin{abstract}
Many similarity-based clustering methods work in two separate steps including similarity matrix computation and subsequent spectral clustering. However, similarity measurement is challenging because it is usually impacted by many factors, e.g., the choice of similarity metric, neighborhood size, scale of data, noise and outliers. Thus the learned similarity matrix is often not suitable, let alone optimal, for the subsequent clustering. In addition, nonlinear similarity often exists in many real world data which, however, has not been effectively considered by most existing methods. To tackle these two challenges, we propose a model to simultaneously learn cluster indicator matrix and similarity 
information in kernel spaces in a principled way. We show theoretical relationships to kernel k-means, k-means, and spectral clustering methods. Then, to address the practical issue of how to select the most suitable kernel for a particular clustering task, we further extend our model with a multiple kernel learning ability. With this joint model, we can automatically accomplish three subtasks of finding the best cluster indicator matrix, the most accurate similarity relations and the optimal combination of multiple kernels. By leveraging the interactions between these three subtasks in a joint framework, each subtask can be iteratively boosted by using the results of the others towards an overall optimal solution. Extensive experiments are performed to  demonstrate the effectiveness of our method.
\end{abstract}

\section{Introduction}
Clustering is a fundamental topic in data mining and machine learning  \cite{pengnonnegative}. It partitions data points into different groups, such that the objects within a group are similar to one another and different from those in other groups. Various methods have been proposed over the past decades. Some well-known algorithms include k-means clustering \cite{macqueen1967some}, spectral clustering \cite{ng2002spectral}, and hierarchical clustering \cite{johnson1967hierarchical}. 

Thanks to the simplicity and the effectiveness, the k-means algorithm is widely used. However, it fails to identify arbitrarily shaped clusters. Kernel k-means \cite{scholkopf1998nonlinear} has been developed to capture nonlinear structure information hidden in data sets. Kernel-based learning methods requires one to specify a kernel, which means one assumes a certain shape of the underlying data space. Thus the performance of kernel-based methods are largely affected by the choice of kernel.  

Spectral clustering does a low-dimensional embedding of the similarity matrix of the data before performing k-means clustering \cite{ng2002spectral}. The similarity between every pair of points, as an input, leverages the manifold information in this clustering model. Thus similarity-based clustering methods usually show better performance than k-means algorithm. However, the performance of this kind of methods is largely determined by the similarity matrix \cite{huang2015new}. Any variations during the similarity measurement, such as metric, neighborhood size, and data scale, may lead to suboptimal performance.

Recently, self-expression has been successfully utilized in subspace recovery \cite{elhamifar2009sparse,luo2011multi}, low rank representation \cite{kang2015cikm,kang2015robust}, and recommender systems \cite{kang2016top}. It represents each data point in terms of the other points. By solving an optimization problem, the similarity information is automatically learned from the data. This approach can not only reveal low-dimensional structure, but also be robust to noise and data scale \cite{huang2015new}.

\section{Contributions}
In this paper, we perform clustering built upon the idea of using samples from the data to ``express itself". Rather than local structure learning \cite{nie2014clustering}, this approach extracts the global structure of data and can be extended to kernel spaces. Unlike existing clustering algorithms that work in two separate steps, we simultaneously learn similarity matrix and cluster indicators by imposing a rank constraint on the Laplacian matrix of the learned similarity matrix. By leveraging the intrinsic interactions between learning similarity and cluster indicators, our proposed model seamlessly integrates them into a joint framework, where the result of one task is used to improve the other one. To capture the nonlinear structure information inherent in many real world data sets, we directly develop our method in a kernel space, which is well known for its ability to explore the nonlinear relation. We design an efficient algorithm to find an optimal solution to our model, and show the theoretical analysis on the connections to kernel k-means, k-means, and spectral clustering methods. 

While effective, the kernel in use often has enormous influence on the performance of any kernel method. Unfortunately, the most suitable kernel for a specific task is usually unknown in advance. Exhaustive search on a user-defined pool of kernels is time-consuming and impractical when the sizes of the pool and data become large \cite{zeng2011feature}. Thus we further propose a multiple kernel algorithm for our model. Another benefit of applying multiple kernels is that we can fully utilize information from different sources equipped with heterogeneous features \cite{yu2012optimized}. To alleviate the effort for kernel construction and integrating complementary information, we learn an appropriate consensus kernel from a linear combination of multiple input kernels. As a result, our joint model can simultaneously learn the similarity information, cluster indicator matrix, and the optimal combination of multiple kernels. Extensive empirical results on real-world benchmark data sets show that our method consistently outperforms other state-of-the-art methods. 

\textbf{Notations.} In this paper, matrices are written as upper case letters and vectors are represented by boldface lower-case letters. The $i$-th column and the $(i,j)$-th element of matrix $X$ are denoted by $X_i$ and $x_{ij}$, respectively. The $\ell_2$-norm of a vector $\vec{x}$ is defined as $\|\vec{x}\|^2=\vec{x}^T\cdot\vec{x}$, where $T$ means transpose. $I$ denotes the identity matrix and $\vec{1}$ denotes a column vector with all the elements as one. Tr($\cdot$)  is the trace operator. $0\leq Z\leq 1$ means all elements of $Z$ are in the range $[0,1]$.

\section{Clustering with Single Kernel}
According to the self-expressive property \cite{elhamifar2009sparse}, 
\begin{equation}
\label{self}
X_i\approx \sum_{j}X_jz_{ij},\quad
s.t. \quad Z_i^T\vec{1}=1, 0\leq z_{ij}\leq 1,
\end{equation}
 where $z_{ji}$ is the weight for $j$-th sample.
More similar data points should receive bigger weights and the weights should be smaller for less similar points. Thus $Z$ is also called similarity matrix, which represents the global structure of data. Note that (\ref{self}) is in a similar spirit of Locally Linear Embedding (LLE) \cite{roweis2000nonlinear}, which assumes that the data points lie on a manifold and each data point can be expressed as a linear combination of its nearest neighbors. The difference from LLE lies in the fact that we specify no neighborhood, which is automatically determined by our method. 

To obtain $Z$, we solve the following problem:
 \begin{equation}
\label{sparse}
\begin{split}
\min_Z \|X- XZ\|_F^2+\alpha \|Z\|_F^2,\\
s.t. \quad  Z^T\vec{1}=\vec{1}, \quad 0\leq Z\leq 1,
\end{split}
\end{equation} 
where the first term is to measure reconstruction error, the second term is imposed to avoid the trivial solution $Z=I$, and $\alpha$ is a trade-off parameter.

One drawback of (\ref{sparse}) is that it assumes linear relations between samples. To recover the nonlinear relations between the data points, we extend (\ref{sparse}) to kernel spaces by deploying a general kernelization framework \cite{zhang2010general}. Define $\phi: \mathcal{R}^D\rightarrow \mathcal{H}$ to be a kernel mapping the data samples from the input space to a reproducing kernel Hilbert space $\mathcal{H}$. For $X=[X_1,\cdots,X_n]$ containing $n$ samples, the transformation is  $\phi(X)=[\phi(X_1),\cdots,\phi(X_n)]$. The kernel similarity between data samples $X_i $ and $X_j$ is defined through a predefined kernel as  $K_{X_i,X_j}=<\phi(X_i),\phi(X_j)>$. It is easy to observe that all similarities can be computed exclusively using the kernel function and one does not need to know the transformation $\phi$. This is known as the kernel trick and it greatly simplifies the computations in the kernel space when the kernels are precomputed.
Then (\ref{sparse}) becomes
\begin{equation}
\begin{split}
&\min_Z Tr(K-2KZ+Z^TKZ)+\alpha \|Z\|_F^2  \\
&s.t.\quad  Z^T\vec{1}=\vec{1}, \quad 0\leq Z\leq 1.
\end{split}
\label{our}
\end{equation}
By solving above problem, we learn the linear sparse relations of $\phi(X)$, and thus the nonlinear relations among $X$. Note that (\ref{our}) goes back to (\ref{sparse}) if a linear kernel is adopted.

Ideally, we expect that the number of connected components in $Z$ are exactly $c$ if the given data set $X$ consists of $c$ clusters (that is, $Z$ is block diagonal with proper permutations). However, the solution $Z$ from (\ref{our}) might not satisfy this desired property. Therefore, we will add another constraint based on the following theorem \cite{mohar1991laplacian}. 
\begin{theorem}
The multiplicity $c$ of the eigenvalue 0 of the Laplacian matrix $L$ of $Z$ is equal to the number of connected components in the graph with the similarity matrix $Z$.
\end{theorem}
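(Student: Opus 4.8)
The plan is to treat $L=D-W$ as the standard graph Laplacian, where $W$ is the symmetric weight matrix derived from $Z$ (so that $w_{ij}\ge 0$) and $D$ is the diagonal degree matrix with $D_{ii}=\sum_j w_{ij}$. First I would record the fundamental quadratic-form identity
\begin{equation}
\vec{f}^T L \vec{f}=\frac{1}{2}\sum_{i,j} w_{ij}(f_i-f_j)^2,
\label{quad}
\end{equation}
which is verified by expanding the right-hand side and collecting terms. Identity~(\ref{quad}) immediately shows $L$ is positive semidefinite, so all of its eigenvalues are nonnegative and $0$ is the smallest; moreover $L\vec{1}=\vec{0}$ because each row of $L$ sums to zero, so $0$ is genuinely an eigenvalue. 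Since $L$ is symmetric, its algebraic and geometric multiplicities for any eigenvalue coincide, so it suffices to compute $\dim\ker L$.

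The key step is to characterize $\ker L$. From (\ref{quad}), the equation $L\vec{f}=\vec{0}$ is equivalent to $\vec{f}^T L\vec{f}=0$, which forces $f_i=f_j$ for every pair $(i,j)$ with $w_{ij}>0$. Propagating this equality along edges, I would argue that $\vec{f}$ must be constant on each connected component of the graph. Conversely, any vector that is constant on each component lies in $\ker L$. Hence $\ker L$ is exactly the set of component-wise constant vectors.

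To count the dimension, I would invoke the indicator vectors $\vec{1}_{A_1},\dots,\vec{1}_{A_c}$ of the $c$ connected components $A_1,\dots,A_c$. These are linearly independent because their supports are pairwise disjoint, and each lies in $\ker L$ by the characterization above; since any component-wise constant vector is a linear combination of them, they form a basis of $\ker L$, giving $\dim\ker L=c$. Equivalently, after a permutation that groups vertices by component, $L$ becomes block diagonal, $L=\mathrm{diag}(L_1,\dots,L_c)$, with each block $L_k$ the Laplacian of a connected graph; each $L_k$ contributes exactly one zero eigenvalue (its null space is spanned by the all-ones vector on that block), and multiplicities add across blocks. Either route yields that the multiplicity of the eigenvalue $0$ equals $c$.

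The main obstacle I anticipate is the null-space characterization: establishing rigorously that enforcing $f_i=f_j$ across every positively weighted edge propagates to a single constant value over an entire connected component, while no further constraint couples distinct components. This requires a careful connectivity (path-following) argument rather than a one-line computation, and it is precisely where the combinatorial structure of the graph, as opposed to mere linear algebra, enters the proof.
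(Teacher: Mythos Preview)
Your proposal is correct and follows the standard textbook argument for this classical fact about graph Laplacians. Note, however, that the paper does not actually prove this theorem: it is stated without proof and attributed to \cite{mohar1991laplacian} as a known result, so there is no in-paper argument to compare against. Your approach---the quadratic-form identity, the characterization of $\ker L$ as component-wise constant vectors via edge propagation, and the basis of indicator vectors (or equivalently the block-diagonal decomposition)---is exactly the conventional proof one finds in spectral graph theory references, and it is complete as sketched.
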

Theorem 1 means that $rank(L)=n-c$ if the similarity matrix $Z$ contains exactly $c$ connected components. Thus our new clustering model is to solve:
\begin{equation}
\begin{split}
&\min_Z Tr(K-2KZ+Z^TKZ)+\alpha \|Z\|_F^2  \\
&s.t.\quad  Z^T\vec{1}=\vec{1}, \quad 0\leq Z\leq 1,\quad rank(L)=n-c.
\end{split}
\label{ournew}
\end{equation}

Problem (\ref{ournew}) is not easy to tackle, since $L:=D-\frac{Z^T+Z}{2}$, where $D\in \mathcal{R}^{n\times n}$ is a diagonal matrix with the $i$-th diagonal element $\sum_j\frac{z_{ij}+z_{ji}}{2}$, also depends on similarity matrix $Z$. 

Here $L$ is positive semi-definite, thus $\sigma_i(L)\geq 0$, where $\sigma_i(L)$ is the $i$-th smallest eigenvalue of $L$. $rank(L)=n-c$ is equivalent to $\sum_{i=1}^{c} \sigma_i(L)=0$. It is not easy to enforce this constraint because the optimization problem with a rank constraint is known to be of combinatorial complexity. To mitigate the difficulty, \cite{wang2015discriminative,nie2016constrained} incorporates the rank constraint into the objective function as a regularizer. Motivated by this consideration, we relax the constraint and reformulate our model as
\begin{equation}
\begin{split}
&\min_Z Tr(K-2KZ+Z^TKZ)+\alpha \|Z\|_F^2+\beta \sum_{i=1}^{c} \sigma_i(L) \\
&s.t.\quad  Z^T\vec{1}=\vec{1}, \quad 0\leq Z\leq 1.
\end{split}
\label{newmodel}
\end{equation}
The minimization  will make the regularizer $\sum_{i=1}^{c} \sigma_i(L)\rightarrow 0$ if $\beta$ is large enough. Then the constraint $rank(L)=n-c$ will be satisfied. 

Problem (\ref{newmodel}) is still a challenging problem because of the last term. Fortunately, it can be solved by using Ky Fan's Theorem \cite{fan1949theorem}, i.e., 
\begin{equation}
\sum_{i=1}^{c} \sigma_i(L)=\min_{P^TP=I} Tr(P^TLP),
\end{equation}
where $P\in\mathcal{R}^{n\times c}$ is the indicator matrix. The $c$ elements of $i$-th row $P_{i,:}\in \mathcal{R}^{c\times 1}$ are the measure of the membership of data point $X_i$ belonging to the $c$ clusters. Finally, our model of twin learning for similarity and clustering with a single kernel (SCSK) is formulated as
\begin{equation}
\begin{split}
&\min_{Z, P} Tr(K-2KZ+Z^TKZ)+\alpha \|Z\|_F^2+\beta Tr(P^TLP), \\
& s.t.\quad Z^T\vec{1}=\vec{1}, \quad 0\leq Z\leq 1, \quad P^TP=I.
\end{split}
\label{finalmodel}
\end{equation}
By solving (\ref{finalmodel}), we directly obtain the indicator matrix $P$; therefore, we do not need to perform spectral clustering any more. By alternatively updating $Z$ and $P$, they can improve each other and optimize (\ref{finalmodel}). 

\subsection{Optimization Algorithm} 
We use an alternating optimization strategy for (\ref{finalmodel}). When $Z$ is fixed, (\ref{finalmodel}) becomes
\begin{equation}
\min_{P^TP=I} Tr(P^TLP).
\end{equation}
The optimal solution $P$ is obtained by the $c$ eigenvectors of $L$ corresponding to the $c$ smallest eigenvalues.

When $P$ is fixed, (\ref{finalmodel}) can be reformulated column-wisely as:
\begin{equation}
\begin{split}
&\min_{Z_i} K_{ii}-2K_{i,:}Z_i+Z_i^TKZ_i+\alpha Z_i^TZ_i+\frac{\beta}{2}d_i^T Z_i,\\
&s.t. \quad Z_i^T\vec{1}=1,\quad 0\leq z_{ij}\leq 1,
\label{solvez}
\end{split}
\end{equation}
where $d_i\in\mathcal{R}^{n\times 1}$ is a vector with the $j$-th element $d_{ij}$ being $d_{ij}=\|P_{i,:}-P_{j,:}\|^2$. To obtain (\ref{solvez}), the important equation in spectral analysis 
\begin{equation}
\sum_{i,j}\frac{1}{2}\|P_{i,:}-P_{j,:}\|^2z_{ij}=Tr(P^TLP)
\end{equation}
is used.
(\ref{solvez}) can be further simplified as
\begin{equation}
\begin{split}
&\min_{Z_i} Z_i^T(\alpha I+K)Z_i+(\frac{\beta d_i^T}{2}-2K_{i,:})Z_i,\\
&s.t. \quad Z_i^T\vec{1}=1,\quad 0\leq z_{ij}\leq 1.
\end{split}
\label{solveZ}
\end{equation}
This problem can be solved by many existing quadratic programing packages. The complete algorithm is outlined in Algorithm 1. 
\begin{algorithm}[!tb]
\scriptsize
\caption{The algorithm of SCSK }
\label{alg1}
 {\bfseries Input:} Kernel matrix $K$, parameters  $\alpha>0$,  $\beta>0$.\\
{\bfseries Initialize:} Random matrix $Z$.\\
 {\bfseries REPEAT}
\begin{algorithmic}[1]
 \STATE Update $P$, which is formed by the $c$ eigenvectors of $L=D-\frac{Z^T+Z}{2}$ corresponding to the $c$ smallest eigenvalues.

   \STATE For each $i$, update the $i$-th column of $Z$ by solving problem (\ref{solveZ}).

\end{algorithmic}
\textbf{ UNTIL} {stopping criterion is met.}
\end{algorithm}
\section{Theoretical Analysis of SCSK Model}
In this section, we present a theoretical analysis of SCSK and its connections to kernel k-means, k-means, and SC. 
\subsection{Connection to Kernel K-means and K-means}
Here we introduce a theorem which states the equivalence of SCSK and kernel k-means, k-means under some condition.
\begin{theorem}
When $\alpha\to\infty$, the problem (\ref{ournew}) is equivalent to kernel k-means problem.
\end{theorem}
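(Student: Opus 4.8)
The plan is to analyze the feasible minimizer of (\ref{ournew}) as $\alpha\to\infty$ and to show that it collapses onto a cluster structure whose objective value coincides with that of kernel $k$-means. First I would observe that, since $\|Z\|_F^2\ge 0$ with diverging coefficient $\alpha$, while the first trace term remains bounded (as $K$ is fixed and $0\le Z\le 1$), the limiting minimizer must minimize $\|Z\|_F^2$ among the feasible points, with the trace term only breaking ties at the next order. By Theorem 1 the constraint $rank(L)=n-c$ forces $Z$ to have exactly $c$ connected components, so after a suitable permutation $Z$ is block diagonal with blocks indexed by clusters $C_1,\dots,C_c$ of sizes $n_1,\dots,n_c$.

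Next I would solve the norm minimization block by block. Because $\|Z\|_F^2=\sum_i\|Z_i\|^2$ is separable over columns and each column $Z_i$ supported on its block must satisfy $Z_i^T\vec{1}=1$ with $Z_i\ge 0$, minimizing $\sum_{j\in C_k}z_{ji}^2$ subject to $\sum_{j\in C_k}z_{ji}=1$ yields the uniform weights $z_{ji}=1/n_k$ (by Cauchy--Schwarz, or Lagrange multipliers). A convenient check is that every such uniform block-diagonal $Z$ attains the same value $\|Z\|_F^2=\sum_k n_k^2\cdot n_k^{-2}=c$; hence at leading order all $c$-cluster partitions are tied, and the selection among them is governed entirely by the trace term.

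Then I would substitute the uniform $Z$ into the first term and reduce it. A direct computation gives $Tr(KZ)=Tr(Z^TKZ)=\sum_k n_k^{-1}\sum_{i,j\in C_k}K_{ij}$, so that
\[
Tr(K-2KZ+Z^TKZ)=\sum_i K_{ii}-\sum_{k=1}^c\frac{1}{n_k}\sum_{i,j\in C_k}K_{ij}.
\]
On the other hand, expanding the kernel $k$-means objective $\sum_{k}\sum_{i\in C_k}\|\phi(X_i)-\mu_k\|^2$ with $\mu_k=n_k^{-1}\sum_{j\in C_k}\phi(X_j)$ and $K_{ij}=\langle\phi(X_i),\phi(X_j)\rangle$ produces exactly the same expression. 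Since minimizing (\ref{ournew}) in the limit amounts to choosing the partition $\{C_k\}$ that minimizes this common quantity, the two problems share the same optimal partitions, which establishes the claimed equivalence.

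The main obstacle I anticipate is making the two-scale limit rigorous: I must argue that the minimizer gains nothing from a small, $\alpha$-dependent deviation away from the exactly uniform, exactly block-diagonal $Z$ (which would trade a negligible increase in $\alpha\|Z\|_F^2$ for a decrease in the trace term). Controlling this requires showing that any feasible perturbation lowering the trace term by a constant raises $\|Z\|_F^2$ by a positive constant, so that its penalized cost diverges and the uniform block-diagonal configuration is the unique limit. The remaining steps, namely the block-wise norm minimization and the algebraic match to kernel $k$-means, are then routine.
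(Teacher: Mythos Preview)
Your proposal is correct and follows essentially the same route as the paper: use the rank constraint to force a block-diagonal $Z$, then argue that $\alpha\to\infty$ drives each block to uniform weights $1/n_k$ (so $\|Z\|_F^2=c$ regardless of the partition), and finally identify the remaining trace term with the kernel $k$-means objective. The paper presents these steps more tersely (decomposing block-by-block first, then letting $\alpha\to\infty$ inside each block), while you frame it as a two-scale lexicographic minimization and explicitly expand both $Tr(K-2KZ+Z^TKZ)$ and the kernel $k$-means cost to match them term by term; both the paper and you leave the limiting argument at a heuristic level, and you rightly flag that as the one nontrivial gap.
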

\begin{proof}
The constraint $rank(L)=n-c$ in (\ref{ournew}) makes the solution $Z$ block diagonal. Let $Z^i\in\mathcal{R}^{n_i\times n_i}$ denote the similarity matrix of the $i$-th component of $Z$, where $n_i$ is the number of data samples in the component. Problem (\ref{ournew}) is equivalent to solving the following problem for each $i$:
\begin{equation}
\begin{split}
&\min_{Z^i}  \|\phi(X^i)-\phi(X^i)Z^i\|_F^2+\alpha \|Z^i\|_F^2  \\
&s.t.\quad  (Z^i)^T\vec{1}=\vec{1}, \quad 0\leq Z^i\leq 1,
\end{split}
\label{equal}
\end{equation}
where $X^i$ consists of the samples corresponding to the $i$-th component of $Z$.
When $\alpha\to\infty$, the above problem becomes
\begin{equation}
\begin{split}
&\min_{Z^i} \alpha \|Z^i\|_F^2  \\
s.t.&\hspace{.2cm}  (Z^i)^T\vec{1}=\vec{1}, \hspace{.1cm} 0\leq Z^i\leq 1.
\end{split}
\end{equation}
The optimal solution is that all elements of $Z^i$ are equal to $\frac{1}{n_i}$. 

Thus when $\alpha\to\infty$, the optimal solution $Z$ to problem (\ref{ournew}) is 
\begin{equation}
    z_{ij}=
    \begin{cases}
      \frac{1}{n_k}, & \text{if $X_i$ and $X_j$ are in the same $k$-th component} \\
      0, & \text{otherwise}
    \end{cases}
  \end{equation}
Denote the solution set of this form by $ \mathcal{C}$. It is easy to observe that $\|Z\|_F^2=c$. Thus (\ref{ournew}) becomes
\begin{equation}
\min_{Z_i\in \mathcal{C}} \sum_i\|\phi(X_i)-\phi(X)Z_i\|^2
\label{kernels}
\end{equation}
It is easy to deduce that $\phi(X)Z_i$ is the mean of cluster $c_i$ in the kernel space. Therefore, (\ref{kernels}) is exactly the kernel k-means. Thus our proposed algorithm is to solve the kernel k-means problem when $\alpha\to\infty$.
\end{proof}
\begin{corollary}
When $\alpha\to\infty$ and a linear kernel is adopted, the problem (\ref{ournew}) is equivalent to k-means problem.
\end{corollary}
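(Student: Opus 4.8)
The plan is to obtain the corollary as an immediate specialization of Theorem 2. That theorem has already shown that when $\alpha\to\infty$, problem (\ref{ournew}) reduces to the kernel k-means objective (\ref{kernels}),
\begin{equation}
\min_{Z_i\in \mathcal{C}} \sum_i\|\phi(X_i)-\phi(X)Z_i\|^2,
\end{equation}
where $\mathcal{C}$ is the set of block-diagonal matrices placing weight $1/n_k$ on the members of the $k$-th component and zero elsewhere. Since nothing in that derivation depends on the specific kernel in use, I would reuse the conclusion of Theorem 2 and then simply instantiate the kernel as linear.

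Concretely, I would first recall the observation made just after (\ref{our}): with a linear kernel the feature map $\phi$ acts as the identity, so $\phi(X_i)=X_i$ and $\phi(X)=X$. Substituting into (\ref{kernels}) yields $\min_{Z_i\in \mathcal{C}} \sum_i\|X_i-XZ_i\|^2$. I would then identify $XZ_i$ as a centroid in the input space: because the column $Z_i$ is supported only on the component containing $X_i$ with each nonzero entry equal to $1/n_k$, we have $XZ_i=\frac{1}{n_k}\sum_{X_j\in c_k}X_j$, which is exactly the mean of cluster $c_k$. The objective is therefore the classical k-means sum-of-squared-distances distortion, and minimizing over the partitions encoded by $\mathcal{C}$ coincides with k-means, establishing the equivalence.

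I expect no genuine obstacle here, since the analytic content is carried entirely by Theorem 2 and the linear-kernel case is just the statement that kernel k-means with the identity feature map degenerates to ordinary k-means. The only point worth making explicit is that the block-diagonal, uniformly-weighted structure enforced by $\mathcal{C}$ converts the abstract reconstruction $\phi(X)Z_i$ into the concrete Euclidean centroid once $\phi$ is dropped; this is a routine specialization rather than a real difficulty.
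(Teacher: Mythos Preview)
Your proposal is correct and matches the paper's own argument: the paper simply notes that the corollary is obvious once one refrains from applying any transformation to $X$ in (\ref{kernels}), i.e., takes $\phi$ to be the identity. Your write-up spells out the centroid identification in more detail, but the route is identical.
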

\begin{proof}
It is obvious when one does not use any transformations on $X$ in (\ref{kernels}).
\end{proof}
\subsection{Connection to Spectral Clustering}
With a predefined similarity $Z$, spectral clustering is to solve the following problem:
\begin{equation}
\min_{P^TP=I} Tr(P^TLP).
\end{equation}
The optimal solution $P$ is obtained by the $c$ eigenvectors of $L$ corresponding to the $c$ smallest eigenvalues. Generally, $P$ can not be directly used for clustering since $Z$ does not have exactly $c$ connected components. To obtain the final clustering results, k-means or some other discretization procedures must be performed on $P$ \cite{huang2013spectral}. 

In our proposed algorithm, the similarity matrix $Z$ is not predefined as the existing spectral clustering methods in the literature. Also, the similarity matrix $Z$ is learned by taking account of the clustering task at hand, as opposed to the existing subspace clustering methods in the literature which only focus on learning the similarity matrix  
 $Z$ without considering the effect of clustering on $Z$ \cite{peng2015subspace}. In our method, the graph with the learned $Z$ will be partitioned into $c$ connected components by using $P$. The optimal solution $P$ is formed by the $c$ eigenvectors of $L$, which is defined by $Z$, corresponding to the $c$ smallest eigenvalues. Therefore, the proposed algorithm learns the similarity matrix $Z$ and the cluster indicator matrix $P$ simultaneously in a coupled way, which leads to a better result in real applications than existing spectral methods as shown in our experiments, since it learns an adaptive graph for clustering.

\section{Clustering with Multiple Kernels}
Although model (\ref{finalmodel}) can automatically learn the similarity matrix  and cluster indicator matrix, its performance will largely be determined by the choice of kernel. It is often impractical to exhaustively search for the most suitable kernel. Moreover, real world data sets are often generated from different sources along with heterogeneous features. Single kernel method may not be able to fully utilize such information. Multiple kernel learning is capable of integrating complementary information and identifying a suitable kernel for a given task. Here we present a way to learn an appropriate consensus kernel from a convex combination of several predefined kernel matrices.

Suppose there are a total number of $r$ different kernel functions $\{K^i\}_{i=1}^{r}$. Correspondingly, there would be $r$ different kernel spaces denoted as $\{\mathcal{H}^i\}_{i=1}^{r}$. An augmented Hilbert space, $\tilde{\mathcal{H}}=\bigoplus_{i=1}^r \mathcal{H}^i$, can be constructed by concatenating all kernel spaces and by using the mapping of $\tilde{\phi}(\vec{x})=[\sqrt{w_1}\phi_1(\vec{x}),$ $\sqrt{w_2}\phi_2(\vec{x}),...,\sqrt{w_r}\phi_r(\vec{x})]^T$ with different weights $\sqrt{w_i}(w_i\ge 0)$. Then the combined kernel $K_w$ can be represented as \cite{zeng2011feature}
\begin{equation}
\label{ukernel}
K_w(\vec{x},\vec{y})=<\phi_w(\vec{x}),\phi_w(\vec{y})>=\sum\limits_{i=1}^r w_iK^i(\vec{x},\vec{y}).
\end{equation} 
 Note that the convex combination of the positive semi-definite kernel matrices  $\{K^i\}_{i=1}^{r}$ is still a positive semi-definite kernel matrix. Thus the combined kernel still satisfies Mercer's condition. Then we propose our joint similarity learning and clustering with multiple kernel (SCMK) model which can be written as
\begin{equation}
\begin{split}
&\min_{Z, P, \vec{w}}\! Tr(\!K_w\!-\!2K_wZ\!+\!Z^TK_wZ\!)\!+\!\alpha \|Z\|_F^2\!+\!\beta\! Tr(P^TLP), \\
& s.t.\quad Z^T\vec{1}=\vec{1}, \quad 0\leq Z\leq 1, \quad P^TP=I, \\
&\hspace{.8cm}K_w=\sum\limits_{i=1}^r w_iK^i, \sum\limits_{i=1}^r \sqrt{w_i}=1, w_i\ge 0.
\end{split}
\label{multimodel}
\end{equation}

By iteratively updating $Z, P, \vec{w}$, each of them will be adaptively refined according to the results of the other two.
\subsection{Optimization }
Problem (\ref{multimodel}) can be solved by alternatively updating $Z$, $P$, and $\vec{w}$, while holding the other variables as constant.

1) Optimizing with respect to $Z$ and $P$ when $\vec{w}$ is fixed: We can directly calculate $K_w$, and the optimization problem is exactly (\ref{finalmodel}). Thus we just need to use Algorithm 1 with $K_w$ as the input kernel matrix.

2) Optimizing with respect to $\vec{w}$ when $Z$ and $P$ are fixed: Solving (\ref{multimodel}) with respect to $\vec{w}$ can be rewritten as \cite{cai2013heterogeneous}
\begin{equation}
\label{optie}
\min_\vec{w} \sum\limits_{i=1}^r w_i h_i   \quad s.t.\quad  \sum\limits_{i=1}^r \sqrt{w_i}=1, \quad w_i\ge 0, 
\end{equation}
where 
\begin{equation}
\label{h}
h_i=Tr(K^i-2K^iZ+Z^TK^iZ).
\end{equation}
The Lagrange function of (\ref{optie}) is 
\begin{equation}
\mathcal{J}(\vec{w})=\vec{w}^T\vec{h}+\gamma (1-\sum_{i=1}^r\sqrt{w_i}).
\end{equation}
By utilizing the Karush-Kuhn-Tucker (KKT) condition with $\frac{\partial \mathcal{J}(\vec{w})}{\partial w_i}=0$ and the constraint $\sum\limits_{i=1}^r \sqrt{w_i}=1$, we obtain the solution of $\vec{w}$ as follows:
\begin{equation}
\label{weight}
w_i=(h_i \sum_{j=1}^r \frac{1}{h_j})^{-2}.
\end{equation}
In Algorithm 2 we provide a complete algorithm for solving (\ref{multimodel}).

\begin{algorithm}
 \scriptsize
\caption{The algorithm of SCMK}
\label{alg2}
 {\bfseries Input:} A set of kernel matrices $\{K^i\}_{i=1}^r$, parameters  $\alpha>0$,  $\beta>0$.\\
{\bfseries Initialize:} Random matrix $Z$, $w_i=1/r$.\\
 {\bfseries REPEAT}
\begin{algorithmic}[1]
\STATE Calculate $K_\vec{w}$ by (\ref{ukernel}).
 \STATE Update $P$ with the $c$ smallest eigenvectors of $L=D-\frac{Z^T+Z}{2}$.
\STATE For each $i$, update the $i$-th column of $Z$ by (\ref{solveZ}).
\STATE Calculate $\vec{h}$ by (\ref{h}).
\STATE Update $\vec{w}$ by (\ref{weight}).
\end{algorithmic}
\textbf{ UNTIL} {stopping criterion is met.}
\end{algorithm}

\section{Experiments}
\captionsetup{position=top}
\begin{table}[!htbp]
\centering
\caption{Description of the data sets}
\label{data}
\renewcommand{\arraystretch}{0.7}
\begin{tabular}{|l|c|c|c|}
\hline
&\textrm{\# instances}&\textrm{\# features}&\textrm{\# classes}\\\hline
\textrm{YALE}&165&1024&15\\\hline
\textrm{JAFFE}&213&676&10\\\hline
\textrm{ORL}&400&1024&40\\\hline
\textrm{AR}&840&768&120\\\hline
\textrm{BA}&1404&320&36\\\hline
\textrm{TR11}&414&6429&9\\\hline
\textrm{TR41}&878&7454&10\\\hline
\textrm{TR45}&690&8261&10\\\hline
\end{tabular}
\end{table}

%

\captionsetup{position=top}
\begin{table*}[!ht]
\centering
\tiny
\renewcommand{\arraystretch}{0.5}
\setlength{\tabcolsep}{.02pt}
\subfloat[Accuracy(\%)\label{acc}]{
\resizebox{1.04\textwidth}{!}{
\begin{tabular}{ |l  |c |c| c| c| c| c| c| c | |c| c| c| c| c }
\hline
Data 	& KKM	& KKM-a	& SC	& SC-a	& RKKM	& RKKM-a & SCSK			  & SCSK-a & MKKM  & AASC  & RMKKM & SCMK 			\\	\hline
YALE  	& 47.12	& 38.97	& 49.42	& 40.52	& 48.09	& 39.71	 & \textbf{55.85} & 45.35  & 45.70 & 40.64 & 52.18 & \textbf{56.97} \\	\hline
JAFFE 	& 74.39 & 67.09 & 74.88 & 54.03 & 75.61 & 67.98	 & \textbf{99.83} & 86.64  & 74.55 & 30.35 & 87.07 & \textbf{100.00}\\	\hline
ORL   	& 53.53 & 45.93 & 57.96 & 46.65 & 54.96 & 46.88  & \textbf{62.35} & 50.50  & 47.51 & 27.20 & 55.60 & \textbf{65.25}	\\	\hline
AR   	& 33.02 & 30.89 & 28.83 & 22.22 & 33.43 & 31.20  & \textbf{56.79} & 41.35  & 28.61 & 33.23 & 34.37 & \textbf{62.38}	\\	\hline
BA 		& 41.20 & 33.66 & 31.07 & 26.25 & 42.17 & 34.35  & \textbf{47.72} & 39.50  & 40.52 & 27.07 & 43.42 & \textbf{47.34}	\\	\hline
TR11   	& 51.91 & 44.65 & 50.98 & 43.32 & 53.03 & 45.04  & \textbf{71.26} & 54.79  & 50.13 & 47.15 & 57.71 & \textbf{73.43}	\\	\hline
TR41  	& 55.64 & 46.34 & 63.52 & 44.80 & 56.76 & 46.80  & \textbf{67.43} & 53.13  & 56.10 & 45.90 & 62.65 & \textbf{67.31}	\\	\hline
TR45  	& 58.79 & 45.58 & 57.39 & 45.96 & 58.13 & 45.69  & \textbf{74.02} & 53.38  & 58.46 & 52.64 & 64.00 & \textbf{74.35}	\\	\hline
\end{tabular}
}

}\\
\renewcommand{\arraystretch}{0.5}
\subfloat[NMI(\%)\label{NMI}]{
\resizebox{1.04\textwidth}{!}{
\begin{tabular}{ |l  |c |c| c| c| c| c| c| c | |c| c| c| c| c }
\hline
Data 	& KKM	& KKM-a	& SC	& SC-a	& RKKM	& RKKM-a & SCSK			  & SCSK-a& MKKM  & AASC  & RMKKM & SCMK 			\\	\hline
YALE 	& 51.34 & 42.07 & 52.92 & 44.79 & 52.29 & 42.87  & \textbf{56.50} & 45.07 & 50.06 & 46.83 & 55.58 & \textbf{56.52}	\\	\hline
JAFFE 	& 80.13 & 71.48 & 82.08 & 59.35 & 83.47 & 74.01  & \textbf{99.35} & 84.67 & 79.79 & 27.22 & 89.37 & \textbf{100.00}	\\	\hline
ORL 	& 73.43 & 63.36 & 75.16 & 66.74 & 74.23 & 63.91  & \textbf{78.96} & 63.55 & 68.86 & 43.77 & 74.83 & \textbf{80.04}	\\	\hline
AR 		& 65.21 & 60.64 & 58.37 & 56.05 & 65.44 & 60.81  & \textbf{76.02} & 59.70 & 59.17 & 65.06 & 65.49 & \textbf{81.51}	\\	\hline
BA  	& 57.25 & 46.49 & 50.76 & 40.09 & 57.82 & 46.91  & \textbf{63.04} & 52.17 & 56.88 & 42.34 & 58.47 & \textbf{62.94}	\\	\hline
TR11  	& 48.88 & 33.22 & 43.11 & 31.39 & 49.69 & 33.48  & \textbf{58.60} & 37.58 & 44.56 & 39.39 & 56.08 & \textbf{60.15}	\\	\hline
TR41 	& 59.88 & 40.37 & 61.33 & 36.60 & 60.77 & 40.86  & \textbf{65.50} & 43.18 & 57.75 & 43.05 & 63.47 & \textbf{65.11}	\\	\hline
TR45	& 57.87 & 38.69 & 48.03 & 33.22 & 57.86 & 38.96  & \textbf{74.24} & 44.36 & 56.17 & 41.94 & 62.73 & \textbf{74.97}	\\	\hline
\end{tabular}
}
}\\
\renewcommand{\arraystretch}{0.5}
\subfloat[ Purity(\%)\label{purity}]{
\resizebox{1.04\textwidth}{!}{
\begin{tabular}{|l  |c |c| c| c| c| c| c| c | |c| c| c| c| c}\hline
Data 	& KKM	& KKM-a	& SC	& SC-a	& RKKM			& RKKM-a & SCSK			  & SCSK-a& MKKM  & AASC  & RMKKM & SCMK 			\\	\hline
YALE   	& 49.15 & 41.12 & 51.61 & 43.06 & 49.79 		& 41.74  & \textbf{57.27} & 55.79 & 47.52 & 42.33 & 53.64 & \textbf{60.00}	\\	\hline
JAFFE  	& 77.32 & 70.13 & 76.83 & 56.56 & 79.58 		& 71.82  & \textbf{99.85} & 96.53 & 76.83 & 33.08 & 88.90 & \textbf{100.00}	\\	\hline
ORL   	& 58.03 & 50.42 & 61.45 & 51.20 & 59.60 		& 51.46  & \textbf{74.00} & 70.37 & 52.85 & 31.56 & 60.23 & \textbf{77.00}	\\	\hline
AR   	& 35.52 & 33.64 & 33.24 & 25.99 & 35.87 		& 33.88  & \textbf{63.45} & 62.37 & 30.46 & 34.98 & 36.78 & \textbf{82.62} 	\\	\hline
BA  	& 44.20 & 36.06 & 34.50 & 29.07 & 45.28 		& 36.86  & \textbf{52.36} & 49.79 & 43.47 & 30.29 & 46.27 & \textbf{52.12}		\\	\hline
TR11   	& 67.57 & 56.32 & 58.79 & 50.23 & 67.93 		& 56.40  & \textbf{82.85} & 80.76 & 65.48 & 54.67 & 72.93 & \textbf{87.44}	\\	\hline
TR41  	& 74.46 & 60.00 & 73.68 & 56.45 & \textbf{74.99}& 60.21  & 73.23 		  & 71.21 & 72.83 & 62.05 & \textbf{77.57} & 73.69	\\	\hline
TR45  	& 68.49 & 53.64 & 61.25 & 50.02 & 68.18 		& 53.75  & \textbf{78.26} & 77.76 & 69.14 & 57.49 & 75.20 & \textbf{78.26}	\\	\hline
\end{tabular}
}}
\caption{Clustering results measured on benchmark data sets. '-a' denotes the average performance on those 12 kernels. Both the best results for single kernel and multiple kernel methods are highlighted in boldface. \label{clusterres}}
\end{table*}
\captionsetup{position=bottom}
\begin{figure*}[hbtp]
\centering
\subfloat[Acc\label{acc}]{\includegraphics[width=.3\textwidth]{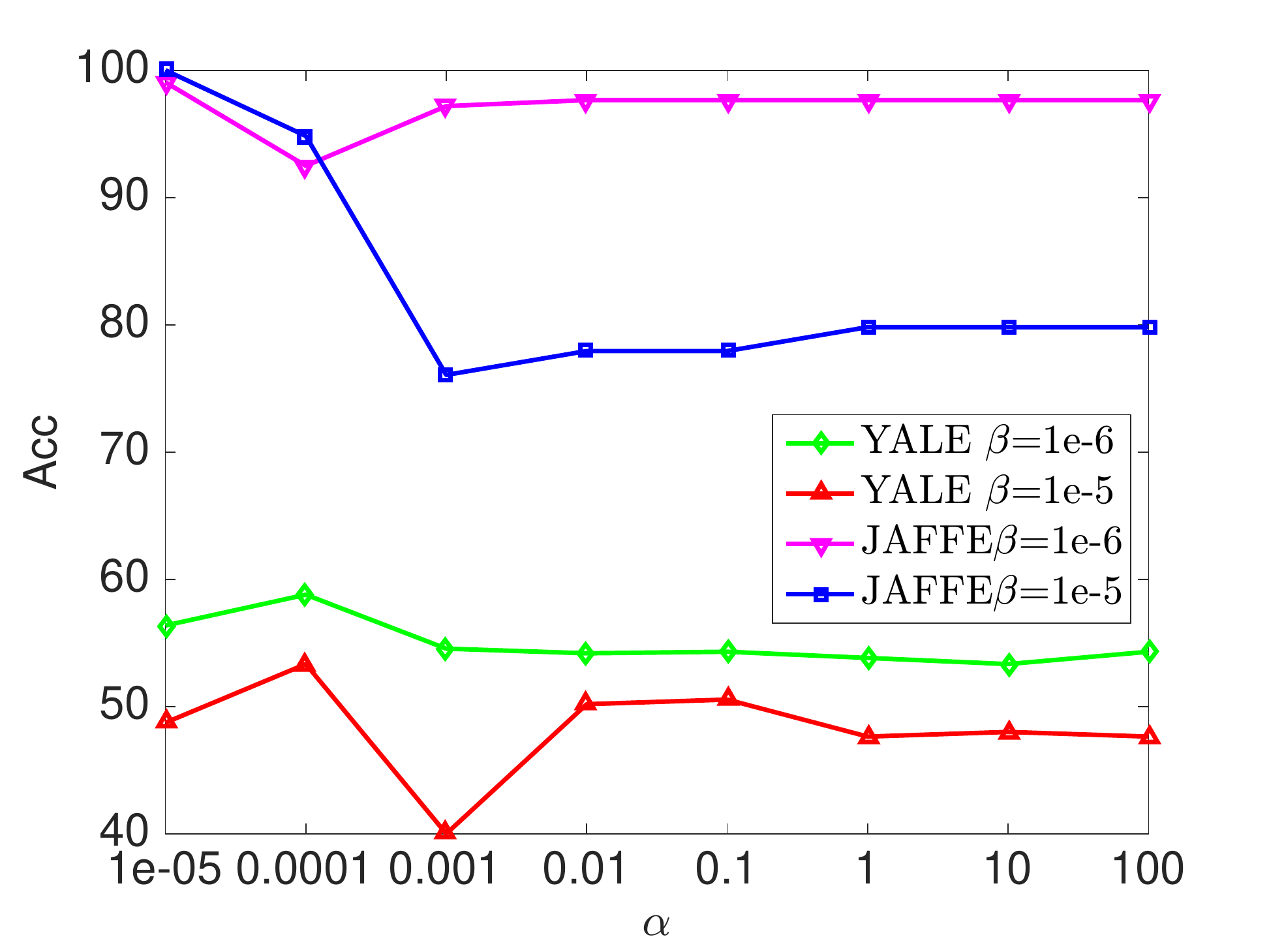}}
\subfloat[NMI\label{nmi}]{\includegraphics[width=.3\textwidth]{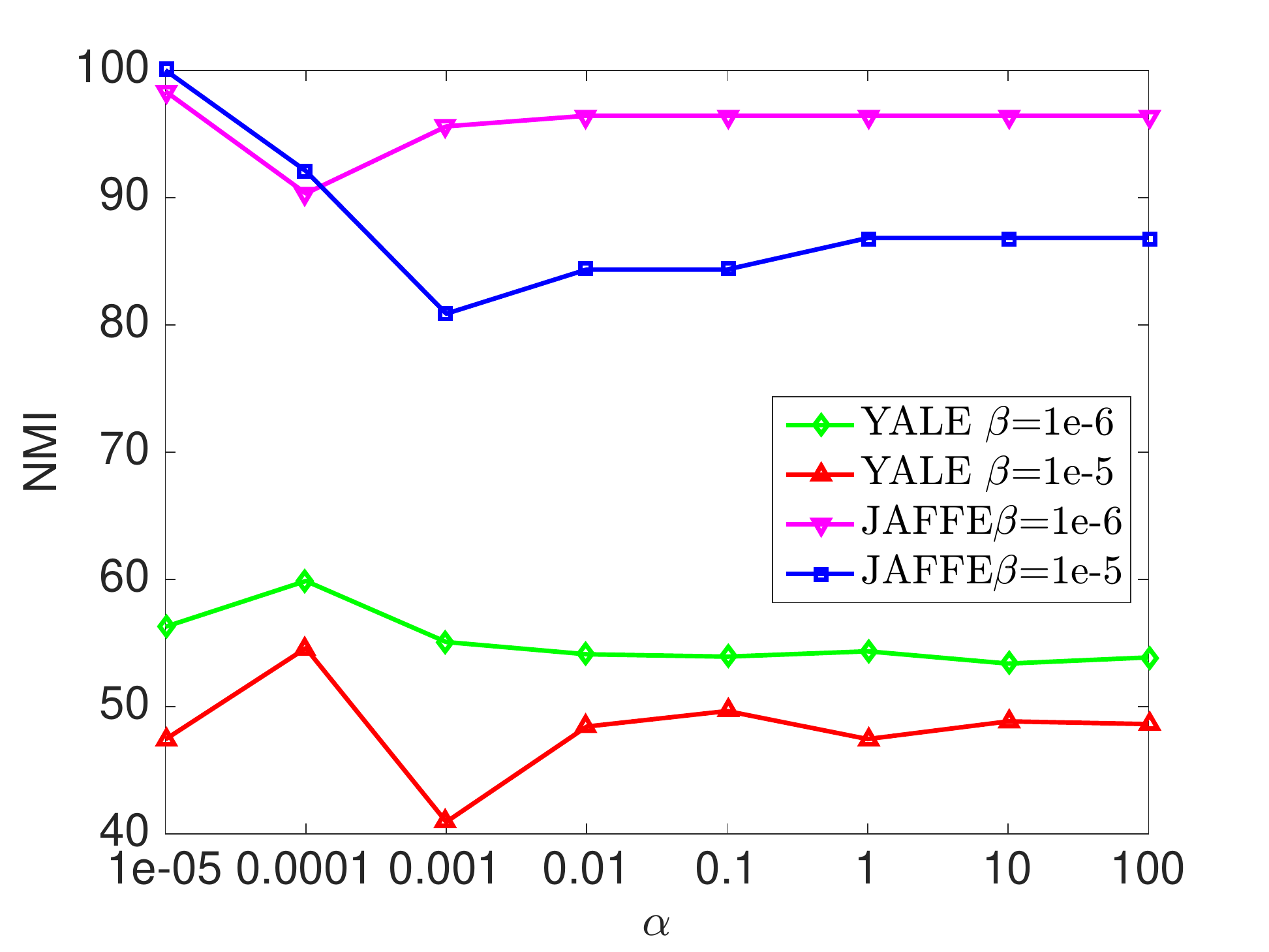}}
\subfloat[Purity\label{purity}]{\includegraphics[width=.3\textwidth]{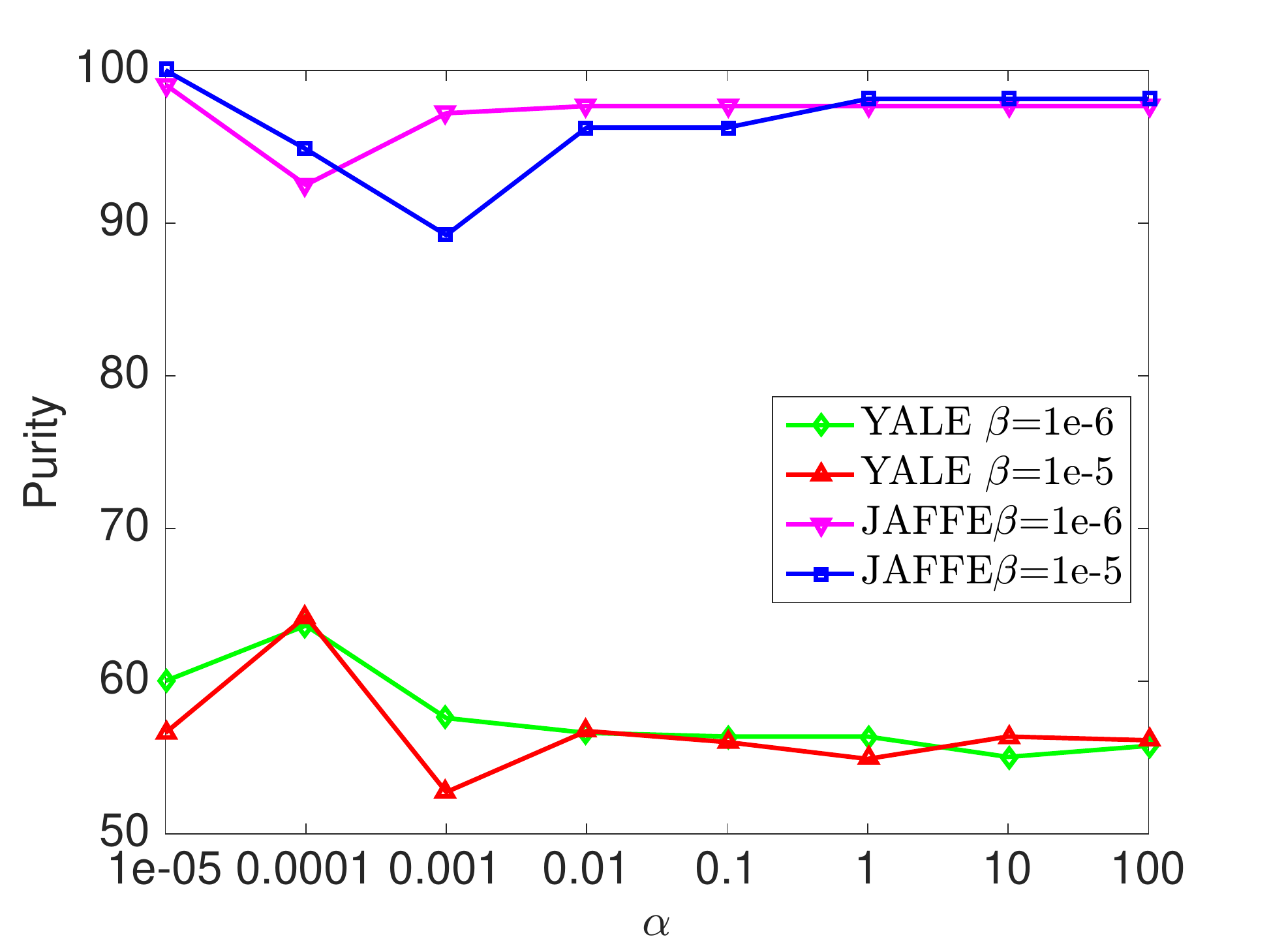}}
\caption{The effect of parameters $\alpha$ and $\beta$ on the YALE and JAFFE data sets.\label{para}}
\end{figure*}
In this section, we demonstrate the effectiveness of the proposed method on real world benchmark data sets. 
\subsection{Data Sets}
There are altogether eight benchmark data sets used in our experiments. Table \ref{data} summarizes the statistics of these data sets. Among them, five are image ones, and the other three are text corpora\footnote{http://www-users.cs.umn.edu/~han/data/tmdata.tar.gz}. The five image data sets consist of four commonly used face databases (ORL\footnote{http://www.cl.cam.ac.uk/research/dtg/attarchive/facedatabase.html}, YALE\footnote{http://vision.ucsd.edu/content/yale-face-database}, AR\footnote{http://www2.ece.ohio-state.edu/~aleix/ARdatabase.html} \cite{martinez2007ar} and JAFFE\footnote{http://www.kasrl.org/jaffe.html}), 
and a binary alpha digits data set BA\footnote{http://www.cs.nyu.edu/~roweis/data.html}. 

\subsection{Experiment Setup}
To assess the effectiveness of multiple kernel learning, we design 12 kernels which include: seven Gaussian kernels of the form $K(\vec{x},\vec{y})=exp(-\|\vec{x}-\vec{y}\|_2^2/(td_{max}^2))$, where $d_{max}$ is the maximal distance between samples and $t$ varies over the set $\{0.01, 0.05, 0.1, 1, 10, 50, 100\}$; a linear kernel $K(\vec{x},\vec{y})=\vec{x}^T\vec{y}$; four polynomial kernels $K(\vec{x},\vec{y})=(a+\vec{x}^T\vec{y})^b$ with $a=\{0,1\}$ and $b=\{2,4\}$. Furthermore, all kernels are rescaled to $[0,1]$ by dividing each element by the largest pair-wise squared distance.

For single kernel methods, we run kernel k-means (KKM) \cite{scholkopf1998nonlinear},  spectral clustering (SC) \cite{ng2002spectral}, robust kernel k-means (RKKM) \cite{du2015robust}, and our proposed SCSK\footnote{https://github.com/sckangz/AAAI17} on each kernel separately. The methods in comparison are downloaded from the their authors' websites. And we report both the best and the average results over all these kernels. 

For multiple kernel methods, we implement the following algorithms on a combination of above kernels.

MKKM\footnote{http://imp.iis.sinica.edu.tw/IVCLab/research/Sean/mkfc/code}. MKKM \cite{huang2012multiple} extends k-means in a multiple-kernel setting. However, it uses a different way to learn the kernel weight. 

AASC\footnote{http://imp.iis.sinica.edu.tw/IVCLab/research/Sean/aasc/code}. AASC \cite{huang2012affinity} extends spectral clustering to the situation where multiple affinities exist.

RMKKM\footnote{https://github.com/csliangdu/RMKKM}. RMKKM \cite{du2015robust} adopts $\ell_{21}$ norm to measure the loss of k-means. 

SCMK. Our proposed method for joint similarity learning and clustering with multiple kernels.  

For spectral clustering like SC and AASC, we run k-means on spectral embedding to obtain the clustering results. To reduce the influence of initialization, we follow the strategy suggested in \cite{yang2010image,du2015robust}, and we repeat clustering 20 times and present the results with the best objective values. We set the number of clusters to the true number of classes for all clustering algorithms.

To quantitatively evaluate the clustering performance, we adopt the three widely used metrics, accuracy (Acc), normalized mutual information (NMI) \cite{cai2009locality}, and Purity. 
\subsection{Clustering Result}
Table \ref{clusterres} shows the clustering results in terms of accuracy, NMI and Purity on all the data sets. It can be seen that the proposed SCSK and SCMK produce promising results. Especially, our method can substantially improve the performance on JAFFE, AR, BA, TR11, and TR45 data sets. The big difference between best and average results confirms the fact that the choice of kernel has a huge influence on the performance of single kernel methods. This difference motivates the development of multiple kernel learning method. Besides, multiple kernel clustering approaches usually improve the results over single kernel clustering methods. 
\subsection{Parameter Selection}
There are two parameters $\alpha$ and $\beta$ in our models. We let $\alpha$ vary over the range of $\{\textrm{1e-5, 1e-4, 1e-3, 0.01, 0.1, 1, 10, 100}\}$, and $\beta$ over $\{\textrm{1e-6, 1e-5}\}$. Figure \ref{para} shows how the clustering results of SCMK in terms of Acc, NMI, and Purity vary with $\alpha$ and $\beta$ on JAFFE and YALE data sets. We can observe that the performance of SCMK is very stable with respect to a large range of $\alpha$ values and it is more sensitive to the value of $\beta$. 

\section{Conclusion}
In this paper, we first propose a clustering method to simultaneously perform similarity learning and the cluster indicator matrix construction. In our method, the similarity learning and the cluster indicator learning are integrated within one framework; the method can be easily extended to kernel spaces, so as to capture nonlinear structure information. The connections of the proposed method to kernel k-means, k-means, and spectral clustering are also established. To avoid extensive search of the best kernel, we further incorporate multiple kernel learning into our model. Similarity learning, cluster indicator construction, and kernel weight learning can be boosted by using the results of the other two. Extensive experiments have been conducted on real-world benchmark data sets to demonstrate the superior performance of our method. 
\section{Acknowledgements}
This work is supported by US National Science Foundation Grants IIS 1218712. Q. Cheng is the corresponding author.  
\bibliographystyle{aaai}
\bibliography{ref}

\end{document}